\let\Ginclude@graphics\@org@Ginclude@graphics 
\title[BRAC+: Improved Behavior Regularized Offline Reinforcement Learning]{BRAC+: Improved Behavior Regularized Actor Critic for Offline Reinforcement Learning}
  \author{\Name{Chi Zhang} \Email{zhan527@usc.edu}\\
  \Name{Sanmukh Rao Kuppannagari} \Email{kuppanna@usc.edu}\\
  \Name{Viktor K Prasanna} \Email{prasanna@usc.edu}\\
  \addr Hughes Aircraft Electrical Engineering Center, 3740 McClintock Ave, Los Angeles, CA 90089
 }
\DeclareMathOperator*{\st}{s.t.\ }
\DeclareMathOperator*{\kl}{\mathcal{D}_{\text{KL}}}
\DeclareMathOperator*{\argmax}{arg\,max}
\DeclareMathOperator*{\argmin}{arg\,min}
\begin{document}

\maketitle

\begin{abstract}
Online interactions with the environment to collect data samples for training a Reinforcement Learning (RL) agent is not always feasible due to economic and safety concerns. The goal of Offline Reinforcement Learning is to address this problem by learning effective policies using previously collected datasets. Standard off-policy RL algorithms are prone to overestimations of the values of out-of-distribution (less explored) actions and are hence unsuitable for Offline RL. Behavior regularization, which constraints the learned policy within the support set of the dataset, has been proposed to tackle the limitations of standard off-policy algorithms. In this paper, we improve the behavior regularized offline reinforcement learning and propose BRAC+. First, we propose quantification of the out-of-distribution actions and conduct comparisons between using Kullback–Leibler divergence versus using Maximum Mean Discrepancy as the regularization protocol. We propose an analytical upper bound on the KL divergence as the behavior regularizer to reduce variance associated with sample based estimations.
Second, we mathematically show that the learned Q values can diverge even using behavior regularized policy update under mild assumptions. This leads to large overestimations of the Q values and performance deterioration of the learned policy.
To mitigate this issue, we add a gradient penalty term to the policy evaluation objective. By doing so, the Q values are guaranteed to converge. On challenging offline RL benchmarks, BRAC+  outperforms the baseline behavior regularized approaches by $40\%\sim 87\%$ and the state-of-the-art approach by $6\%$.
\end{abstract}

\section{Introduction}\label{sec:intro}

Reinforcement Learning (RL) has shown great success in a wide range of applications including board games \citep{alphago}, energy systems \citep{chi_buildsys19}, robotics \citep{rl_robots_nn}, recommendation systems \citep{recommendation_rl}, etc. The success of RL relies heavily on extensive online interactions with the environment for exploration. However, this is not always feasible in the real world as it can be expensive or dangerous~\citep{offlineRL_tutorial}. 

Offline RL, also known as batch RL, avoids online interactions with the environment by learning from a static dataset that is collected in an offline manner~\citep{offlineRL_tutorial}. While standard off-policy RL algorithms~\citep{dqn, ddpg, sac} can, in theory, be employed to learn from an offline data. In practice, they perform poorly due to distributional shift between the behavior policy (probability distribution of actions conditioned on states as observed in the dataset) of the collected dataset and the learned policy~\citep{offlineRL_tutorial}. The distributional shift manifests itself in form of overestimation of the out-of-distribution (OOD) actions, leading to erroneous Bellman backups.

Prior works tackle this problem via behavior regularization~\citep{bcq, bear,wu2019behavior,behavior_prior}. This ensures that the learned policy stays ``close" to the behavior policy. This is achieved by adding a regularization term that calculates the ``divergence" between the learned policy and the behavior policy. Kernel Maximum Mean Discrepancy (MMD) \citep{mmd}, Wasserstein distance and KL divergence are widely used~\citep{wu2019behavior}. The regularization term is either fixed~\citep{wu2019behavior}, or tuned via dual gradient descent~\citep{bear}, or applied using a trust region objective \citep{behavior_prior}.

In this paper, we propose improvements to the Behavior Regularized Actor Critic (BRAC) algorithm presented in \citep{wu2019behavior} and propose BRAC+. 
The key idea of behavior regularization is that the learned policy only takes actions that have high probability in the dataset. Under this criteria, we compare KL divergence and Maximum Mean Discrepancy (MMD) for behavior regularization. We show that MMD may erroneously assign low penalty to out-of-distribution actions when the behavior policy is multi-modal, leading to catastrophic failure. 
Moreover, sample-based estimation of the KL divergence used by current techniques~\citep{wu2019behavior} is both computationally expensive and prone to high variance. Therefore, to reduce variance, we derive an analytical upper bound on the KL divergence measure as the regularization term in the objective function. 
Next, we mathematically show that the learned Q values can diverge even using behavior regularized policy update under mild assumptions. This is due to the erroneous generalization of the neural network, which leads to large overestimations of the Q values and performance deterioration of the learned policy. To mitigate this problem, we add a gradient penalty term to the policy evaluation objective. By doing so, the Q values are guaranteed to converge.

Our experiments suggest that BRAC+ outperforms the baseline behavior regularized approaches by $40\%\sim 87\%$ and the state-of-the-art approach by $6\%$ on challenging D4RL benchmarks \citep{d4rl}.

\section{Background}
\paragraph{Markov Decision Process}
RL algorithms aim to solve Markov Decision Process (MDP) with unknown dynamics. A Markov decision process \citep{rl_intro} is defined as a tuple $<\mathcal{S}, \mathcal{A}, R, P, \mu>$, where $\mathcal{S}$ is the set of states, $\mathcal{A}$ is the set of actions, $R(s, a, s'):\mathcal{S}\times\mathcal{A}\times\mathcal{S}\rightarrow \mathbb{R}$ defines the intermediate reward when the agent transitions from state $s$ to $s'$ by taking action $a$, $P(s'|s,a):\mathcal{S}\times\mathcal{A}\times\mathcal{S}\rightarrow [0, 1]$ defines the probability when the agent transitions from state $s$ to $s'$ by taking action $a$, $\mu: \mathcal{S}\rightarrow[0, 1]$ defines the starting state distribution. The objective of reinforcement learning is to select policy $\pi: \mathcal{\mu}\rightarrow P(A)$ to maximize the following objective: 
\begin{equation}
J(\pi)= \underset{\substack{s_0\sim\mu,a_t\sim \pi(\cdot|s_t)\\s_{t+1}\sim P(\cdot|s_t,a_t)}}{\mathbb{E}}[\sum_{t=0}^{\infty}\gamma^t R(s_t,a_t,s_{t+1})]
\label{eq:mdp_objective}
\end{equation}
The Q function under policy $\pi$ is defined as $Q^{\pi}(s_t,a_t)=\mathbb{E}_{\pi}\sum_{t'=t}^{\infty}r(s_t',a_t')$.

\paragraph{Off-policy Reinforcement Learning}
Modern deep off-policy reinforcement learning algorithms such as DQN \citep{dqn}, SAC \citep{sac} and TD3 \citep{td3} optimize Equation~\ref{eq:mdp_objective} by learning the Q function using data stored in a replay buffer $\mathcal{D}$:
\begin{align}
    \label{eq:policy_evaluation}
    \min_{\psi}\mathbb{E}_{(s,a,s',r)\sim\mathcal{D}} [(Q^{\pi}_{\psi}(s,a) - (r(s,a) + \gamma \mathbb{E}_{{a'\sim \pi_{\theta}}}Q^{\pi}_{\psi^{'}}(s',a')))]^2\qquad \text{(Policy Evaluation)}
\end{align}
where the Q function and the policy $\pi$ are approximated by neural networks parameterized by $\psi$ and $\theta$, respectively. $\psi^{'}$ denotes the parameters of the target Q network~\citep{dqn}. Then, the policy is trained to maximize the learned Q function:
\begin{align}
    \label{eq:policy_update}
    \max_{\theta}\mathbb{E}_{s\sim\mathcal{D}}[Q^{\pi}_{\psi}(s,\pi_{\theta})]\qquad\qquad \text{(Policy Update)}
\end{align}
Although data collected from any policies can be used to perform off-policy learning in Equation~\ref{eq:policy_evaluation} and~\ref{eq:policy_update}, it is crucial that the updated policy keeps interacting with the environment to collect on-policy data so that erroneous generalization of the Q function on unseen states can be corrected \citep{offlineRL_tutorial}.

\paragraph{Behavior Regularized Actor Critic}
Behavior Regularized Actor Critic (BRAC) \citep{wu2019behavior} solves the offline reinforcement learning problem by augmenting the policy update step in Equation~\ref{eq:policy_update} as:
\begin{align}
    \label{eq:policy_update_brac}
    \max_{\theta}\mathbb{E}_{s\sim\mathcal{D}}[Q^{\pi}_{\psi}(s,\pi_{\theta})]\qquad  \st \mathbb{E}_{s\sim\mathcal{D}}[D(\pi_{\theta}(\cdot|s)||\pi_b(\cdot|s))]<\epsilon \qquad \text{(BRAC Policy Update)}
\end{align}
where $D$ is a distance measurement between the learned policy $\pi_{\theta}$ and the behavior policy $\pi_b$. 
The behavior policy is defined as the solution that maximizes the probability of actions conditioned on states in the dataset:
\begin{align}
    \label{eq:behavior_policy}
    \pi_b(a|s)=\argmax_{\pi_b} \mathbb{E}_{(s,a)\sim\mathcal{D}}[p(a|s)]
\end{align}
Common distance measurements used in previous works \citep{wu2019behavior,bear} include Maximum Mean Discrepancy (MMD) and KL divergence. BRAC-p \citep{wu2019behavior} uses the standard policy evaluation in Equation~\ref{eq:policy_evaluation} while BRAC-v \citep{wu2019behavior} also augments the policy evaluation. As they show similar empirical results, our method BRAC+ proposes improvements over the simpler version BRAC-p.



\section{Problem Statement}\label{sec:problem_statement}
Given a fixed dataset $\mathcal{D} = \{(s_i,a_i,s'_i,r_i)\}_{i=1}^{N}$, the objective of offline reinforcement learning is to learn policy $\pi_{\theta}$ such that Equation~\ref{eq:mdp_objective} is maximized. In principle, any off-policy algorithms can be directly applied. The major challenge is the absence of on-policy data to correct the erroneous generalization of the Q function on unseen states \citep{offlineRL_tutorial}.

\section{Related Work}
We briefly summarize prior works in deep offline RL and discuss their relationship with our approach. Please refer to \citep{offlineRL_tutorial} for traditional batch RL approaches. As discussed in Section~\ref{sec:intro}, the fundamental challenge in learning from a static data is to avoid 
out-of-distribution actions \citep{offlineRL_tutorial}. This requires solving two problems: 1) estimation of the behavior policy, and 2) quantification of the out-of-distribution actions. We follow BCQ \citep{bcq}, BEAR \citep{bear} and BRAC \citep{wu2019behavior} by learning the behavior policy using a conditional Variational Auto-encoder (VAE) \citep{vae}. To avoid out-of-distribution actions, BCQ generates actions in the target values by perturbing the behavior policy. However, this is over-pessimistic in most of the cases. BRAC \citep{wu2019behavior} constrains the policy using various sample-based divergence measurements including Maximum Mean Discrepancy (MMD), Wasserstein distance and KL divergence with penalized policy update (BRAC-p) or policy evaluation (BRAC-v). BEAR \citep{bear} is an instance of BRAC with penalized policy improvement using MMD \citep{mmd}. Sample-based estimation is computationally expensive and suffers from high variance. In contrast, our method uses an analytical upper-bound of the KL divergence to constrain the distance between the learned policy and the behavior policy. It is both computationally efficient and has low variance. \citep{behavior_prior} solves trust-region objective instead of using penalty. 
For KL regularized policy improvement with fixed temperature, the optimal policy has a closed form solution \citep{critic_rr}. However, tuning the fixed temperature is difficult as it doesn't imply the actual divergence. On the contrary, automatical temperature tuning via dual gradient descent can bound the actual divergence value within a certain threshold. \citep{Fox2019TowardPU} presents a closed-form expression for the regularization coefficient that completely eliminates the bias in entropy-regularized value updates. However, the \texttt{softmax} operator introduced by the approach makes it hard to use in continuous action space.
CQL \citep{cql} avoids estimating the behavior policy by learning a conservative Q function that lower-bounds its true value. Hyperparameter search is another challenging problem in offline RL. \citep{batch_rl_hyperparameter_gradient} uses a gradient-based optimization to search the hyperparameter using held-out data. MOPO \citep{mopo} follows MBPO \cite{mbpo} with additional reward penalty on unreliable model-generated transitions. MBOP \citep{mbop} learns the dynamics model, the behavior policy and a truncated value function to perform online planning. \citep{morel} learns a surrogate MDP using the dataset, such that taking out-of-distribution actions transit to the terminal state. The out-of-distribution actions are detected using the agreement of the predictions from ensembles of the learned dynamics models.

\section{Improving Behavior Regularized Offline Reinforcement Learning}
In this section, we first quantify the criteria of the out-of-distribution actions and compare various existing approaches to satisfy the criteria. We argue that Kullback–Leibler divergence (KLD) is superior to Maximum Mean Discrepancy (MMD) in meeting the criteria. However, accurately estimation of the KL divergence requires large amounts of samples to reduce the variance of the estimator. To avoid the expensive sampling, we derive an upper bound on the KL divergence between the learned policy and the behavior policy that can be computed analytically. Lastly, we mathematically show that the difference between the Q values of the learned policy and the behavior policy can be arbitrarily large even if the KL divergence is small. This leads to the learned Q function diverging. To prevent it, we apply the gradient penalty when learning the Q function.

\subsection{Quantification of Out-of-distribution Actions}\label{sec:quan_ood_actions}
It is critical to define a criteria that accurately distinguishes between policies that generate in-distribution and out-of-distribution actions. 
The key insight that can be leveraged to define this criteria is that the learned policy should only take actions that have high probability in the dataset $\mathcal{D}$.
We say that an action is in-distribution if $\pi_b(a|s)>\epsilon$. 
Ideally, the learned policy $\pi_{\theta}$ should have positive probability for in-distribution actions and zero probability elsewhere. 
However, this is impossible for continuous policies represented as neural networks. In practice, we say a policy distribution $\pi_{\theta}$ only contains in-distribution actions if:
\begin{align}
  \label{eq:criteria}
  \mathbb{E}_{a\sim\pi_{\theta}(\cdot|s)}[\pi_b(a|s)]>\epsilon
\end{align}
where $\epsilon$ is the pre-defined threshold. For the rest of the paper, we use the terms ``a policy distribution contains in/out-of-distribution actions" and ``in/out-of-distribution actions" interchangeably. Based on the criteria, we compare two behavior regularization protocols that are used in previous works \citep{bear,wu2019behavior}.

\paragraph{Kernel MMD}
Kernel Maximum Mean Discrepancy (MMD) \citep{mmd} was first introduced in \citep{bear} to penalize the policy from diverging from the behavior policy:
\begin{align}
  \label{eq:mmd}
  \text{MMD}_{k}^{2}(\pi(\cdot|s),\pi_b(\cdot|s))=\underset{x,x'\sim\pi(\cdot|s)}{\mathbb{E}}[K(x,x')]
  -2\underset{\substack{x\sim\pi(\cdot|s),\\y\sim\pi_b(\cdot|s)}}{\mathbb{E}}[K(x,y)]
  +\underset{y,y'\sim\pi_b(\cdot|s)}{\mathbb{E}}[K(y,y')]
\end{align}
where $K$ is a kernel function. 
Symmetric kernel functions such as Laplacian and Gaussian kernels are typically used \citep{mmd}. 
Kernel MMD can effectively prevent out-of-distribution actions for single-modal datasets (e.g. The dataset is collected by a Gaussian policy) as shown in \citep{bear}. 
However, it fails on multi-modal datasets (e.g. The dataset is collected by several distinct Gaussian policies.). 
This is because actions with low MMD distance may have low probability density. 
We show an example in Figure~\ref{fig:divergence}. In the middle figure, the behavior policy is a mixture of two Gaussian distributions. To minimize the MMD distance between the learned Gaussian policy and the behavior policy, the mean of the Gaussian policy is around zero, which has low density. This causes out-of-distribution actions are preferred to in-distribution actions and the failure of the behavior regularized offline reinforcement learning.


\paragraph{KL divergence}
For two probability distributions $P$ and $Q$ on some probability space $\chi$, the KL divergence from $Q$ to $P$ is given as
\begin{align}
    \kl(P,Q) = \int_{x \sim \chi} P(x)\log \frac{P(x)}{Q(x)}dx
\end{align}
By constraining the KL divergence $\kl(\pi_{\theta}(\cdot|s), \pi_b(\cdot|s))<\epsilon_{\text{KL}}$, criteria~\ref{eq:criteria} is automatically satisfied if we fix the entropy of the learned policy: $\mathcal{H(\pi_{\theta})}=\mathcal{H}_0$. Estimating the KL divergence requires i) learning an explicit likelihood model from the dataset; and ii) generating a large number of samples to reduce the variance. We can achieve step i) by applying state-of-the-art deep generative models such as Conditional Normalizing Flow \citep{normalizing_flow_intro} or Variational Auto-encoder \citep{vae}. However, evaluating the likelihood of a large number of samples is computationally expensive.

\begin{figure*}
  \centering
  \includegraphics[width=\linewidth]{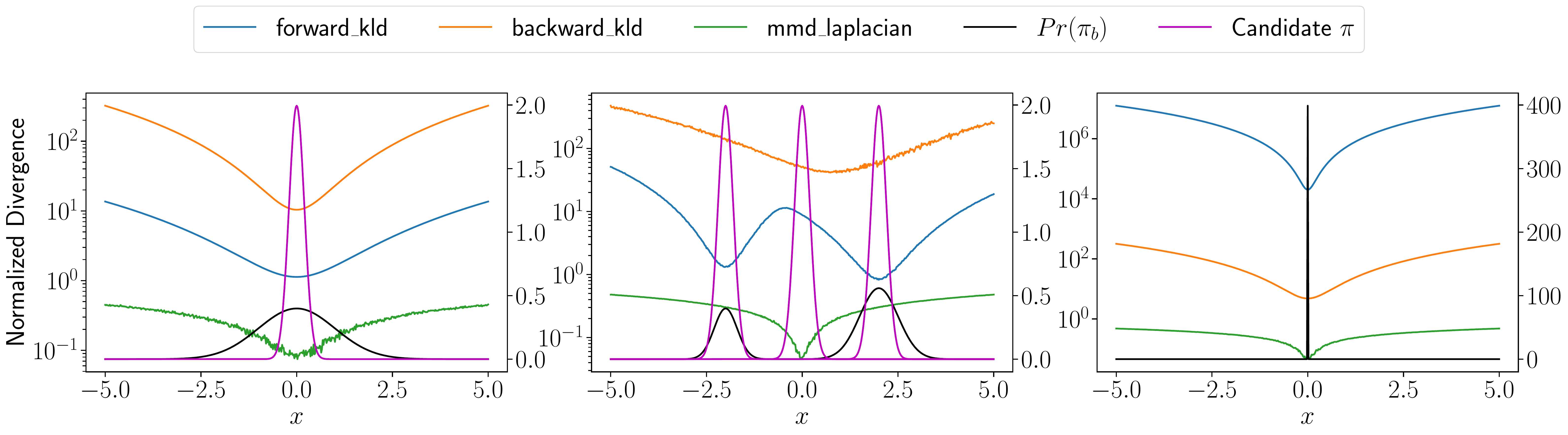}
  \vspace{-0.5cm}
  \caption{The black curve in the two figures represents the behavior distribution $\pi_b$. The blue, orange and green curves in the graph show forward KL, backward KL and MMD with Laplacian kernel between $\pi_b$ and $\pi_{\theta}=\mathcal{N}(x, \sigma)$, respectively, where $x$ is a variable between $[-10, 10]$ and $\sigma$ is fixed. In the left figure, $\pi_b=\mathcal{N}(0, 1)$. In the middle figure, $\pi_b$ is a mixture of two Gaussian distributions: $\mathcal{N}(-2.0, 0.3)$ and $\mathcal{N}(2.0, 0.5)$. The weight of each component is 0.3 and 0.7, respectively. $\sigma$ is set to 0.2. In the right figure, , $\pi_b=\mathcal{N}(0, 0.001)$. We show the candidate policies in purple for detailed comparison.}
  \label{fig:divergence}
\end{figure*}

\paragraph{Analytical KL divergence upper bound}

To reduce the variance and avoid computationally expensive sampling, we derive an upper bound on the KL divergence between the learned policy $\pi_{\theta}$ and the behavior policy $\pi_{b}$ that can be computed analytically.
Assume we learn $\pi_{b}$ using a Conditional VAE \citep{vae}) with latent variable $Z$. According to the evidence lower bound (ELBO), we obtain $\log \pi_{b}(a|s)\geq \mathbb{E}_{z\sim q(z|s,a)}[\log p(a|s,z)]-\kl(q(z|s,a)||p(z))$, where $q(z|s,a)$ is the approximated posterior distribution and $p(z)$ is the prior. Then, the KL divergence is bounded by:
\begin{align}
  \label{eq:kl_upperbound}
  \kl(\pi_{\theta}(\cdot|s)||&\pi_{b}(\cdot|s))=\mathbb{E}_{a\sim\pi_{\theta}}[\log \pi_{\theta}(a|s)-\log \pi_{b}(a|s)] \nonumber\\
  &\leq \mathbb{E}_{a\sim\pi_{\theta}}[\log \pi_{\theta}(a|s)]-\mathbb{E}_{a\sim\pi_{\theta}}[\mathbb{E}_{z\sim q(z|s,a)}[\log p(a|s,z)]-\kl(q(z|s,a)||p(z))] \nonumber\\
  &=\mathbb{E}_{a\sim\pi_{\theta}, z\sim q(z|s,a)}[\kl(\pi_{\theta}(\cdot|s)||p(\cdot|s,z))+\kl(q(z|s,a)||p(z))]\nonumber \\
  &=\mathcal{D}^{\text{upper}}_{\text{KL}}(\pi_{\theta}(\cdot|s)||\pi_{b}(\cdot|s))
\end{align}
In our Conditional VAE implementation \citep{vae}, $p(\cdot|s,z)$ is the output distribution of the decoder. $q(z|s,a)$ is the output distribution of the encoder. Both of them are Gaussian distributions with parameterized mean and variance. $p(z)=\mathcal{N}(0,1)$ is the prior distribution. We parameterize the learned policy as a Gaussian distribution: $\pi_{\theta}(a|s)=\mathcal{N}(\mu_{\theta}(s), \sigma_{\theta}(s))$, where $\mu_{\theta}$ and $\sigma_{\theta}$ represents the output of a neural network. Since there is a closed-form solution of the KL divergence between two Gaussian distributions, Equation~\ref{eq:kl_upperbound} can be computed analytically.

\paragraph{Impact of the bias by using the analytical upper bound}
According to \citep{vae}, the bias term of the upper bound is $\kl(q(z|s,a)||p(z|s,a))$, where $p(z|s,a)$ denotes the true posterior distribution and $q(z|s,a)$ denotes the approximated posterior distribution. Using the linearity of the expectation, we obtain:
\begin{align}
  \resizebox{0.92\columnwidth}{!}{$
  \underset{s\sim\mathcal{D}}{\mathbb{E}}[{\kl(\pi_{\theta}(\cdot|s)||\pi_{b}(\cdot|s))}]=\underset{s\sim\mathcal{D}}{\mathbb{E}}[\mathcal{D}^{\text{upper}}_{\text{KL}}(\pi_{\theta}(\cdot|s)||\pi_{b}(\cdot|s))]-\underset{s\sim\mathcal{D}}{\mathbb{E}}[\kl(q(z|s,a)||p(z|s,a))]$
  }
\end{align}
For a fixed dataset and the trained conditional VAE, $\mathbb{E}_{s\sim\mathcal{D}}[\kl(q(z|s,a)||p(z|s,a))]$ is a fixed value. Thus, we can directly set the threshold of the upper bound without the necessity to estimate the bias because the threshold of the true KL divergence is also a hyperparameter.

Based on the aforementioned arguments, the final policy update step of our improved behavior regularized actor critic is:
\begin{align}
  \label{eq:policy_update_bracp}
  \max_{\theta}\mathbb{E}_{s\sim\mathcal{D}}[Q^{\pi}_{\psi}(s,\pi_{\theta})]\ \st \ \mathbb{E}_{s\sim\mathcal{D}}[{\mathcal{D}^{\text{upper}}_{\text{KL}}(\pi_{\theta}(\cdot|s)||\pi_{b}(\cdot|s))}]<\epsilon^{\text{upper}}_{\text{KL}}\ \text{and}\  \mathcal{H(\pi_{\theta})}=\mathcal{H}_0
\end{align}
We solve the constrained optimization problem in Equation~\ref{eq:policy_update_bracp} by using dual gradient descent \citep{sgd}.

\subsection{Gradient penalized policy evaluation}\label{sec:gp}
In this section, we argue that behavior regularization is not sufficient to learn stable policies from offline data due to arbitrarily large overestimation of the Q values.
We start by presenting Theorem~\ref{the:q_value_bound}:
\begin{theorem}
  \label{the:q_value_bound}
  Let the policy before and after the update in Equation~\ref{eq:policy_update_bracp} be $\pi_{old}$ and $\pi_{new}$. Assume the Q function generalizes such that $\pi_{old}$ is never a local maximizer of $\mathbb{E}_{a\sim\pi}[Q^{\pi}(s,a)]$, and after each policy evaluation in Equation~\ref{eq:policy_evaluation}, we have $\mathbb{E}_{a\sim\pi_b}[Q^{\pi_{\text{new}}}(s,a)-Q^{\pi_{\text{old}}}(s,a)]>\delta$, where $\delta=\sup_{a}|Q^{\pi_{\text{new}}}(s,a)-Q^{\pi_{\text{old}}}(s,a)|\cdot\sqrt{\nicefrac{\epsilon_{\text{KL}}}{2}}$ and $\epsilon_{\text{KL}}=\kl(\pi_{\text{new}}||\pi_b)$. Then, we have $\mathbb{E}_{a\sim\pi_{\text{new}}}[Q^{\pi_{\text{new}}}(s,a)]>\mathbb{E}_{a\sim\pi_{\text{old}}}[Q^{\pi_{\text{old}}}(s,a)]$.
\end{theorem}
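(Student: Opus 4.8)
The plan is to show that the per-state quantity $\mathbb{E}_{a\sim\pi}[Q^{\pi}(s,a)]$ strictly increases across one actor--critic iteration by splitting its change into a \emph{policy-improvement} part and a \emph{value-shift} part and controlling each separately. Fix the state $s$ and write $f(a) := Q^{\pi_{\text{new}}}(s,a)-Q^{\pi_{\text{old}}}(s,a)$. First I would add and subtract $\mathbb{E}_{a\sim\pi_{\text{new}}}[Q^{\pi_{\text{old}}}(s,a)]$ to obtain the decomposition
\begin{align*}
&\mathbb{E}_{a\sim\pi_{\text{new}}}[Q^{\pi_{\text{new}}}(s,a)] - \mathbb{E}_{a\sim\pi_{\text{old}}}[Q^{\pi_{\text{old}}}(s,a)] \\
&\quad = \underbrace{\big(\mathbb{E}_{a\sim\pi_{\text{new}}}[Q^{\pi_{\text{old}}}(s,a)] - \mathbb{E}_{a\sim\pi_{\text{old}}}[Q^{\pi_{\text{old}}}(s,a)]\big)}_{A} + \underbrace{\mathbb{E}_{a\sim\pi_{\text{new}}}[f(a)]}_{B}.
\end{align*}
It then suffices to establish $A>0$ and $B\ge 0$.

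For the term $A$ I would read off the policy update of Equation~\ref{eq:policy_update_bracp}. At the instant $\pi_{\text{new}}$ is produced the available critic is $Q^{\pi_{\text{old}}}$, so $\pi_{\text{new}}$ maximizes $\mathbb{E}_{a\sim\pi}[Q^{\pi_{\text{old}}}(s,a)]$ over the constrained feasible set, which contains $\pi_{\text{old}}$. Since the hypothesis states that $\pi_{\text{old}}$ is never a local maximizer of this objective, the update must move to a strictly larger value, giving $\mathbb{E}_{a\sim\pi_{\text{new}}}[Q^{\pi_{\text{old}}}(s,a)] > \mathbb{E}_{a\sim\pi_{\text{old}}}[Q^{\pi_{\text{old}}}(s,a)]$, i.e. $A>0$. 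This part is essentially a direct consequence of the optimization together with the non-stationarity assumption.

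The crux is the term $B$, and I expect it to be the main obstacle, because the hypothesis only controls the value shift under the behavior policy $\pi_b$, namely $\mathbb{E}_{a\sim\pi_b}[f(a)]>\delta$, whereas $B$ is an expectation under $\pi_{\text{new}}$. To bridge this gap I would use that $f$ is bounded by $\sup_a|f(a)|$ and that the entropy-constrained update keeps $\kl(\pi_{\text{new}}||\pi_b)=\epsilon_{\text{KL}}$ small, then apply Pinsker's inequality to transfer expectations between $\pi_{\text{new}}$ and $\pi_b$:
\[
\left|\mathbb{E}_{a\sim\pi_{\text{new}}}[f(a)] - \mathbb{E}_{a\sim\pi_b}[f(a)]\right| \le \sup_a|f(a)|\cdot\sqrt{\nicefrac{\epsilon_{\text{KL}}}{2}}.
\]
Combining this with the hypothesis and the definition $\delta=\sup_a|f(a)|\cdot\sqrt{\nicefrac{\epsilon_{\text{KL}}}{2}}$ yields
\[
B = \mathbb{E}_{a\sim\pi_{\text{new}}}[f(a)] \ge \mathbb{E}_{a\sim\pi_b}[f(a)] - \sup_a|f(a)|\sqrt{\nicefrac{\epsilon_{\text{KL}}}{2}} > \delta-\delta = 0,
\]
and adding $A>0$ to $B\ge 0$ finishes the argument.

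The delicate point I would scrutinize is the exact constant in the expectation-transfer bound: obtaining $\sqrt{\nicefrac{\epsilon_{\text{KL}}}{2}}$ rather than $\sqrt{2\epsilon_{\text{KL}}}$ requires pairing the standard total-variation form of Pinsker ($\mathrm{TV}\le\sqrt{\nicefrac{\epsilon_{\text{KL}}}{2}}$) with a tight test-function bound, i.e. multiplying $\mathrm{TV}$ by $\sup_a f - \inf_a f$ after centering $f$ (or, equivalently, using the natural overestimation sign condition $f\ge 0$ so that $\sup_a f-\inf_a f=\sup_a|f(a)|$) rather than the crude $2\sup_a|f(a)|$. It is precisely this constant that makes the stated definition of $\delta$ exactly tight enough to force $B\ge 0$. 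I would also verify the two implicit facts the argument relies on: that $\pi_{\text{old}}$ genuinely lies in the feasible set so that $A$ is a legitimate comparison, and that the ``never a local maximizer'' hypothesis upgrades the optimizer's weak inequality to the strict inequality used for $A$.
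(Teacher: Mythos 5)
Your proposal matches the paper's proof essentially step for step: the same add-and-subtract decomposition into the policy-improvement term $A$ (strict by the never-a-local-maximizer assumption) and the value-shift term $B$, the same application of Pinsker's inequality to transfer the hypothesis $\mathbb{E}_{a\sim\pi_b}[\Delta Q(s,a)]>\delta$ into $\mathbb{E}_{a\sim\pi_{\text{new}}}[\Delta Q(s,a)]>0$, and the same final combination. Your closing scrutiny of the constant is well placed: the paper simply asserts $|\mathbb{E}_{a\sim\pi_{\text{new}}}[\Delta Q(s,a)]-\mathbb{E}_{a\sim\pi_b}[\Delta Q(s,a)]|\leq \sup_{a}|\Delta Q(s,a)|\sqrt{\nicefrac{\epsilon_{\text{KL}}}{2}}$ ``according to Pinsker's inequality'' without the centering or sign argument you describe, so the condition you flag (e.g.\ $\Delta Q\geq 0$, giving $\sup_a \Delta Q-\inf_a \Delta Q\leq\sup_a|\Delta Q|$) is precisely the step the paper leaves unjustified, and without it the general bound carries the weaker constant $\sqrt{2\epsilon_{\text{KL}}}$.
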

\begin{proof}
  Please see Appendix~\ref{appendix:proof}.
\end{proof}
Theorem~\ref{the:q_value_bound} states that by iteratively updating the policy and the Q function using Equation~\ref{eq:policy_update_bracp} and Equation~\ref{eq:policy_evaluation}, under mild assumptions, the Q value of the current policy will monotonically increase after each iteration. 
Note that in practice, the assumption typically holds at states with low density. The gradients at those states are so large that even a small learning rate causes the increase of the Q values to be greater than $\delta$. That's why it's crucial to penalize the large gradients of the Q function.
Otherwise, it leads to infinitely large overestimation of the true Q value and thus a failure in learning a policy. This issue arises from the fact that we fit the Q function using $\mathbb{E}_{(s,a)\sim\mathcal{D}}[Q^{\pi}(s,a)]$. And the Q value of the policy $\mathbb{E}_{(s,a)\sim\pi}[Q^{\pi}(s,a)]$ relies on the generalization of the neural networks. Such generalization may be erroneous and may lead to divergence of the Q function. Note that in standard off-policy reinforcement learning, the erroneous generalization can be corrected due to the new \textbf{on-policy} data that is collected by environmental interactions~\citep{offlineRL_tutorial}.


\paragraph{Gradient penalized policy evaluation} To fix this issue, we need to control how the learned Q function generalizes. CQL \citep{cql} controls it by setting the maximum difference between the Q value of the learned policy and the behavior policy. In this work, we propose to penalize the gradient of the Q function, such that \textbf{the gradient of the Q function decreases as the distance between the learned policy and the behavior policy increases}. By doing so, there always exists a local maximizer of $\mathbb{E}_{a\sim\pi}[Q^{\pi}(s,a)]$. This guarantees the convergence of the Q function. Mathematically, we augment Equation~\ref{eq:policy_evaluation} as:
\begin{align}
  \label{eq:policy_improvement_gp}
  \resizebox{0.92\columnwidth}{!}{$
  \min_{\psi}\mathbb{E}_{(s,a,s',r)\sim\mathcal{D}} [(Q^{\pi}_{\psi}(s,a) - (r(s,a) + \gamma \mathbb{E}_{{a'\sim \pi_{\theta}}}Q^{\pi}_{\psi^{'}}(s',a')))]^2 +
  \lambda  \underset{a^{''}\sim\pi(\cdot|s)}{\mathbb{E}}(||\nabla_{a^{''}}Q^{\pi}_{\psi}(s,a^{''})||_2 f(\kl(\pi||\pi_b))$
  }
\end{align}
where $\pi$ is the learned policy and $f$ is a positive and monotonically increasing function. In practice, we find \texttt{softplus} function works very well. We tune the weight $\lambda$ using dual gradient descent.

\paragraph{Tradeoff analysis} Note that although gradient penalty prevents the Q function from divergence, it may be too conservative such that correct generalization is discarded. This may lead to worse performance compared with not using gradient penalty as shown in Section~\ref{sec:experiments} for the hopper-medium-replay task.

\begin{algorithm}[!t]
    \caption{BRAC+: Improved Behavior Regularized Actor Critic}
    \label{alg:tropo}
    \begin{algorithmic}[1]
      \STATE Train the behavior policy $\pi_{b}$ on the offline dataset $\mathcal{D}=\{(s_i,a_i,r_i,s^{'}_i)\}_{i=1}^{N}$ via maximum likelihood estimation: $\pi_{b}=\argmax_{\pi_{\beta}}\sum_{i=1}^{N}\log \pi_{\beta}(a_i|s_i)$
      \STATE Train initial policy: $\pi_{\theta}=\argmin_{\pi_{\theta}\in \Pi}\kl(\pi, \pi_{b})$
      \FOR{$e=1:E$}
        \FOR{$t=1:T$}
          \STATE Update Q network using Equation~\ref{eq:policy_improvement_gp}
          \STATE Update the policy using Equation~\ref{eq:policy_update_bracp}
          \STATE Update the target network $\psi^{'}=\tau \psi + (1-\tau)\psi^{'}$
        \ENDFOR
      \ENDFOR
    \end{algorithmic}
\end{algorithm}

\section{Experiments}\label{sec:experiments}

Our experiments\footnote{Our implementation is available at \url{https://github.com/vermouth1992/bracp}.} aim to answer the following questions: 
\begin{enumerate}
    \item How does the performance of our methods compare with baseline behavior regularized approaches and the state-of-the-art model-free and model-based offline RL methods? (Section~\ref{sec:comparative_results})
    \item How does the use of analytical variational upper bound on KL divergence for regularization term compare with Maximum Mean Discrepancy (MMD)? (Section~\ref{subsec:ablation})
    \item Does the Q value diverge in real-world datasets if not using gradient penalty? Does the gradient penalized policy evaluation empirically guarantee the convergence of the Q function during the training? (Section~\ref{subsec:ablation})
\end{enumerate}
To answer these questions, we evaluate our methods on a subset of the D4RL \citep{d4rl} benchmark. We consider three locomotion tasks (hopper, walker2d, and halfcheetah) and four types of datasets: 1) random (rand): collect the interactions of a run of random policy for 1M steps to create the dataset, 2) medium (med): collect the interactions of a run of medium quality policy for 1M steps as the dataset, 3) medium-expert (med-exp): run a medium quality policy and an expert quality policy for 1M steps, respectively, and combine their interactions to create the dataset, 4) mixed (medium-replay): train a policy using SAC \citep{sac} until the performance of the learned policy exceeds a pre-determined threshold, and take the replay buffer as the dataset. 

We compare against state-of-the-art model-free and model-based baselines, including behavior cloning, BEAR \citep{bear} that constrains the learned policy within the support of the behavior policy using sampled MMD, offline SAC \citep{sac}, BRAC-p/v \citep{wu2019behavior} that constrains the learned policy within the support of the behavior policy using various sample-based $f$-divergences to penalize either the policy improvement (p) or the policy evaluation (v), CQL($\mathcal{H}$) \citep{cql} that learns a Q function that lower-bounds its true value. We also compare against model-based approaches including MOPO \citep{mopo} that follows MBPO \citep{mbpo} with additional reward penalties and MBOP \citep{mbop} that learns an offline model to perform online planning. 

\subsection{Comparative Results}\label{sec:comparative_results}

\begin{table*}[!ht]
    \centering
    \vspace{-2em}
    \caption{Results for OpenAI gym \citep{openai_gym} environments in the D4RL \citep{d4rl} datasets. For each task, we train for 1 million gradient steps and report the performance by running the policy obtained at the last epoch of the training for 100 episodes, averaged over 3 random seeds with standard deviation. Each number is the normalized score as proposed in \citep{d4rl}. Please refer to \citep{d4rl} for results on more baselines.}
    \vspace{-1em}
    \resizebox{\columnwidth}{!}{
    \begin{tabular}{c|cccccc|cc}
       \toprule
       \multirow{2}{*}{\textbf{Task Name}} & \multicolumn{6}{c|}{\bf Model-Free}  & \multicolumn{2}{c}{\bf Model-Based} \\
       \cmidrule{2-9}
        &\bf BC & \bf SAC-offline & \bf BEAR & \bf BRAC-p/v & \bf CQL($\mathcal{H}$) & \bf BRAC+ (Ours) & \bf MOPO & \bf MBOP \\
       \midrule
       halfcheetah-rand &2.1 &30.5 & 25.1 & 24.1/31.2 & \bf 35.4 & 29.7$\pm$1.4 &\bf 35.4$\pm$2.5 & 6.3$\pm$4.0 \\
       walker2d-rand & 1.6& 4.1 & 7.3 & -0.2/1.9 & 7.0 &  2.3$\pm$0.0  & \bf 13.6$\pm$2.6 & 8.1$\pm$ 5.5 \\
       hopper-rand  & 9.8& 11.3& 11.4 & 11.0/\bf 12.2 & 10.8 & \bf 12.2$\pm$0.1 & 11.7$\pm$0.4 & 10.8$\pm$ 0.3\\
       \midrule
       halfcheetah-med & 36.1&-4.3 & 41.7 & 43.8/46.3 & 44.4 & \bf 48.2 $\pm$ 0.3 &42.3$\pm$ 1.6 &  44.6$\pm$0.8 \\

       walker2d-med & 6.6&0.9 & 59.1 & 77.5/\bf 81.1 & 79.2 & 77.6$ \pm$ 0.8 &17.8$\pm $19.3 & 41.0 $\pm$ 29.4 \\

       hopper-med &29.0 & 0.8& 52.1 & 32.7/31.1 & \bf 58.0 & 39.1$\pm$3.5 & 28.0$\pm $12.4 & 48.8 $\pm$ 26.8\\

       \midrule
       Average (single-modal) & 14.2 & 7.2 & 32.8 & 31.5/34.0 & \bf 39.1 & 34.9 & 24.8 & 26.6\\
       \midrule
       halfcheetah-med-exp & 35.8&1.8 & 53.4 & 44.2/41.9 & 62.4 & \bf 79.2$\pm$6.2 & 63.3$\pm$ 38.0 & \bf 105.9$\pm$ 17.8\\
       walker2d-med-exp & 6.4& -0.1 & 40.1 & 76.9/81.6 & \bf 98.7 & \bf 102.4$\pm$3.3 &44.6$\pm$ 12.9 & 70.2 $\pm$ 36.2 \\
       hopper-med-exp  &111.9 & 1.6& 96.3 & 1.9/0.8 & \bf 111.0 & \bf 111.2 $\pm$ 1.9 &23.7$\pm $6.0 & 55.1 $\pm$ 44.3\\
       \midrule
       halfcheetah-mixed  & 38.4&-2.4 & 38.6 & 45.4/\bf 47.7 & 46.2 & 46.2$\pm$1.3 & \bf 53.1$\pm$2.0 & 42.3 $\pm$ 0.9\\
       walker2d-mixed & 11.3& 1.9& 19.2 & -0.3/0.9 & 26.7 & \bf 46.1 $\pm$ 1.2 &\bf 39.0$\pm $9.6 & 9.7 $\pm$ 5.3 \\
       hopper-mixed & 11.8&  3.5& 33.7 & 0.6/0.6 & 48.6 & \bf 72.6$\pm$22.2  &\bf 67.5$\pm $24.7 & 12.4 $\pm$ 5.8 \\
       \midrule
       Average (multi-modal) & 35.9 & 1.1 & 46.9 & 28.1/28.9 & 65.6 & \bf 76.3 & 48.5 & 49.3\\
       \midrule
       Average (overall) & 25.1 & 4.1 & 39.8 & 29.8/31.4 & 52.4 &\bf 55.6 & 36.7 & 37.9\\
       \bottomrule
    \end{tabular}
    }
    \label{table:exp_results}
 \end{table*}


 \paragraph{Performance on multi-modal datasets}
 We first compare the performance on multi-modal datasets i.e. med-exp and mixed datasets. Results shown in Table~\ref{table:exp_results} suggest that our method outperforms various model-free baselines on most of the multi-modal datasets, especially on hopper-mix and walker2d-mix by up to 1.5x. 
 Compared with BEAR \citep{bear}, our method improves the performance by $63\%$ due to the advantage of the KL divergence over the kernel MMD as discussed in Section~\ref{sec:quan_ood_actions}. 
 Compare with BRAC \citep{wu2019behavior}, our method improves the performance by $164\%$ due to the advantage of the analytical KL divergence upper bound.
 Compare with the state-of-the-art approach \citep{cql}, our method improves the performance by $16.3\%$.
 
 \paragraph{Performance on single-modal datasets}
 The performance of our method on single-modal (rand and med) dataset outperforms the baseline methods \citep{bear, wu2019behavior} as evident from Table~\ref{table:exp_results} by $3\%\sim 6\%$. Our method is worse than the state-of-the-art method \citep{cql} particularly on datasets collected by random policies.
 This is because randomly gathered dataset typically contains several discontinous modalities, where KL divergence based behavior regularization may stuck at a local optimum.


\subsection{Ablation Study}\label{subsec:ablation}
\begin{figure*}[!ht]
  \centering
  \includegraphics[width=\linewidth]{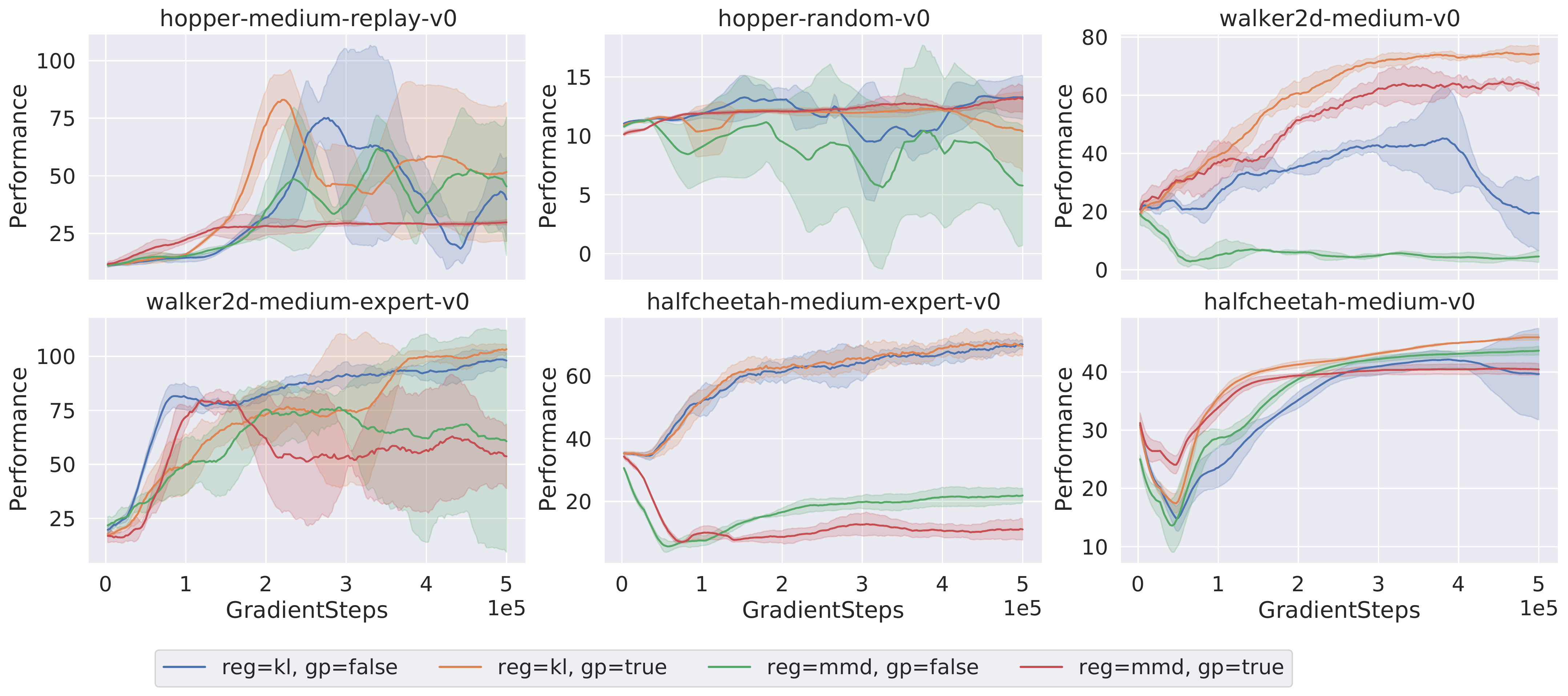}
  \vspace{-2em}
  \caption{Learning curve of various tasks using KL divergence and MMD as behavior regularization protocol, and with and without using gradient penalized policy evaluation. Each setting is repeated for 3 random seeds. The curve is the mean and the shaded area is the standard deviation. The curves are smoothed by a factor of 20. The hyperparameters can be found in Appendix~\ref{appendix:implementation}.}
  \label{fig:ablation_performance}
\end{figure*}
\begin{figure*}[!ht]
  \centering
  \includegraphics[width=\linewidth]{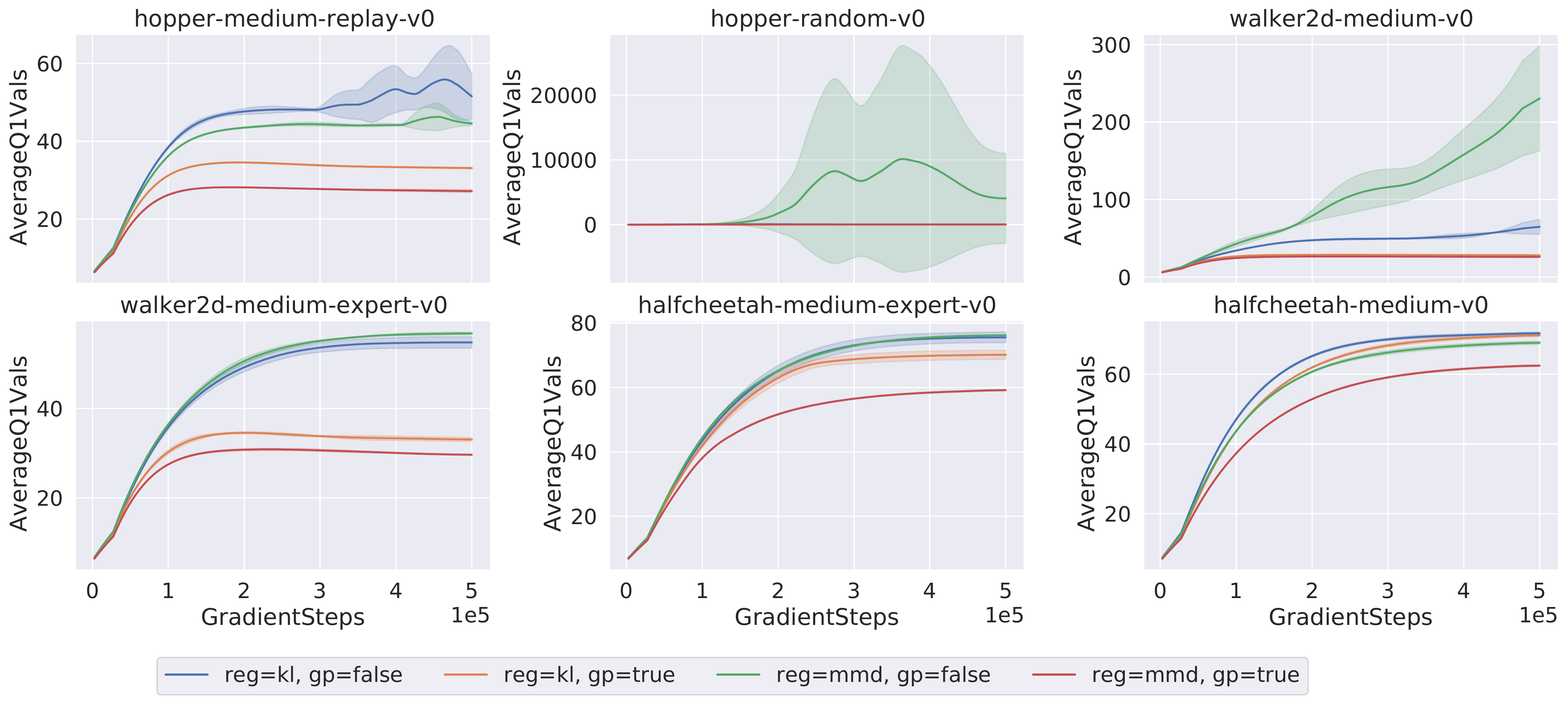}
  \vspace{-2em}
  \caption{The average Q values of all the states in the dataset during training. The setting is the same as in Figure~\ref{fig:ablation_performance}.}
  \label{fig:ablation_q}
\end{figure*}
To answer question (2) and (3), we conduct a thorough ablation study on BRAC+ on 6 tasks with different data collection policies (hopper-medium-replay (mixed), hopper-random, walker2d-medium, walker2d-medium-expert, halfcheetah-medium and halfcheetah-medium-expert).

\paragraph{Maximum Mean Discrepancy vs. KL divergence}
The results of using MMD versus KL divergence are shown in Figure~\ref{fig:ablation_performance}. In general, using KL divergence as the behavior regularization protocol provides better performance than using MMD. The performance is similar for hopper-medium-replay and hopper-random. The performance by using KL divergence is slightly better for walker2d-medium and halfcheetah-medium. The performance discrepancy is huge for halfcheetah-medium-expert and walker2d-medium-expert task and using MMD fails to learn better policies than the behavior cloning. This is because the medium-expert tasks contain multi-modal behavior policies as depicted in Figure~\ref{fig:divergence}, where the MMD assigns low penalty to out-of-distribution actions.

\paragraph{Gradient penalty vs. no gradient penalty}
From Figure~\ref{fig:ablation_performance}, we observe that the performance by using gradient penalty is better than not using gradient penalty except the hopper-medium-replay task with MMD as divergence. This is because gradient penalty prevents correct out-of-distribution generalization in this task. For other tasks, it is clear that without gradient penalty, the performance starts to decrease after some gradient steps due to the erroneous overestimation of the out-of-distribution actions. As shown in Figure~\ref{fig:ablation_q}, the gradient penalized policy evaluation provides more conservative Q values compared with no gradient penalty. It is also worth noticing that the Q values of hopper-medium-replay, hopper-random and walker2d-medium diverges when not using gradient penalty. This verifies our arguments in Section~\ref{sec:gp}.

\section{Discussions and Limitations}
We conjecture that the behavior-regularized approach with gradient penalty is not sufficient to tackle offline RL problems since it fully ignores the state distribution. To see this, we can create a dataset that only adds a few trajectories from an expert policy to a dataset collected by a low-quality policy. If the low-quality policy doesn't visit the ``good'' states in the expert policy (can't combine sub-optimal policies), behavior-regularized approach leads to a policy that imitates the expert policy. Such imitation is likely to fail due to compounding errors \citep{dagger}. The right approach for this dataset is to completely ignore expert trajectories and combine sub-optimal policies in the low-quality regions. To achieve this, we need to consider the state distribution as the density of the ``good'' states is very low.

\section{Conclusion}
In this paper, we improved the behavior regularized offline reinforcement learning by proposing a low-variance upper bound of the KL divergence estimator to reduce variance and gradient penalized policy evaluation such that the learned Q functions are guaranteed to converge. Our experimental results on challenging benchmarks illustrate the benefits of our improvements.

\acks{This work is supported by U.S. National Science Foundation (NSF) under award number 2009057 and U.S. Army Research Office (ARO) under award number W911NF1910362. We would like to thank Chen-Yu Wei for his help in the mathematical derivation.}

\bibliography{acml21}


\appendix

\section{Proofs}\label{appendix:proof}
\subsection{Proof of Theorem~\ref{the:q_value_bound}}
\begin{proof}
    Let the current policy be $\pi_{\text{old}}$ and the Q function be $Q^{\pi_{\text{old}}}$. Since $\pi_{\text{old}}$ is never a local maximizer of $\mathbb{E}_{a\sim\pi}[Q^{\pi}(s,a)]$, update each policy update step defined in Equation~\ref{eq:policy_update_brac}, we obtain:
    \begin{align}
        \label{eq:policy_update_inequal}
        \mathbb{E}_{a\sim \pi_{\text{new}}}[Q^{\pi_{\text{old}}}(s,a)]>\mathbb{E}_{a\sim \pi_{\text{old}}}[Q^{\pi_{\text{old}}}(s,a)]
    \end{align}
    Let $\Delta Q(s,a)=Q^{\pi_{\text{new}}}(s,a)-Q^{\pi_{\text{old}}}(s,a)$. According to Pinsker’s inequality~\citep{pinskter}, we obtain:
    \begin{align}
        |\mathbb{E}_{a\sim\pi_{\text{new}}}[\Delta Q(s,a)]-\mathbb{E}_{a\sim\pi_b}[\Delta Q(s,a)]|\leq \sup_{a}|\Delta Q(s,a)|\sqrt{\frac{\kl(\pi_{\text{new}}||\pi_b)}{2}}
    \end{align}
    According to the assumption, we have $\mathbb{E}_{a\sim\pi_b}[\Delta Q(s,a)]>\sup_{a}|\Delta Q(s,a)|\sqrt{\frac{\kl(\pi_{\text{new}}||\pi_b)}{2}}$. Thus, we obtain:
    \begin{align}
        \label{eq:policy_evaluation_inequal}
        \mathbb{E}_{a\sim\pi_{\text{new}}}[\Delta Q(s,a)]> 0
    \end{align}
    Combining Equation~\ref{eq:policy_update_inequal} and Equation~\ref{eq:policy_evaluation_inequal}, we obtain:
    \begin{align}
        &\mathbb{E}_{a\sim\pi_{\text{new}}}[Q^{\pi_{\text{new}}}(s,a)] - \mathbb{E}_{a\sim\pi_{\text{old}}}[Q^{\pi_{\text{old}}}(s,a)]\nonumber\\
        &=\mathbb{E}_{a\sim\pi_{\text{new}}}[Q^{\pi_{\text{new}}}(s,a)] - \mathbb{E}_{a\sim\pi_{\text{old}}}[Q^{\pi_{\text{old}}}(s,a)] + \mathbb{E}_{a\sim\pi_{\text{new}}}[Q^{\pi_{\text{old}}}(s,a)] - \mathbb{E}_{a\sim\pi_{\text{new}}}[Q^{\pi_{\text{old}}}(s,a)] \nonumber\\
        &=(\mathbb{E}_{a\sim\pi_{\text{new}}}[Q^{\pi_{\text{old}}}(s,a)]-\mathbb{E}_{a\sim\pi_{\text{old}}}[Q^{\pi_{\text{old}}}(s,a)]) + (\mathbb{E}_{a\sim\pi_{\text{new}}}[Q^{\pi_{\text{new}}}(s,a)]-\mathbb{E}_{a\sim\pi_{\text{new}}}[Q^{\pi_{\text{old}}}(s,a)]) \nonumber \\
        &> \mathbb{E}_{a\sim \pi_{\text{new}}}[Q^{\pi_{\text{new}}}(s,a)-Q^{\pi_{\text{old}}}(s,a)]=\mathbb{E}_{a\sim\pi_{\text{new}}}[\Delta Q(s,a)]> 0
    \end{align}
    Thus, $\mathbb{E}_{a\sim\pi_{\text{new}}}[Q^{\pi_{\text{new}}}(s,a)] > \mathbb{E}_{a\sim\pi_{\text{old}}}[Q^{\pi_{\text{old}}}(s,a)]$.
\end{proof}

\section{Implementation Details}\label{appendix:implementation}

\paragraph{Reward scaling}
Any affine transformation of the reward function does not change the optimal policy of the MDP. In our experiments, we rescale the reward to $[0, 1]$ as:
\begin{align}
    r'=(r-r_{min}) / (r_{max} - r_{min})
\end{align}
where $r_{max}$ and $r_{min}$ is the maximum and the minimum reward in the dataset.


\paragraph{Initialization}
If the dataset is collected using a narrow policy distribution in a high dimensional space (e.g. human demonstration), the constrained optimization problem using dual gradient descent finds it difficult to converge if random initialization is used for the policy network. To mitigate this issue, we start with a policy that has the minimum KL divergence with the behavior policy: $\pi_{\theta}=\argmin_{\pi_{\theta}\in \Pi}\kl(\pi, \pi_{\beta})$, where $\Pi$ represents a family of policy types. In this work, we consider $\Pi$ as Gaussian policies. Correspondingly, we initialize the Q network to $Q^{\pi_{\theta}}$.


\paragraph{Policy network}
Our policy network is a 3-layer feed-forward neural network. The size of each hidden layer is 512. We apply RELU activation \citep{relu} after each hidden layer. Following \citep{sac}, the output is a Gaussian distribution with diagonal covariance matrix. We apply \texttt{tanh} to enforce the action bounds. The log-likelihood after applying the \texttt{tanh} function has a simple closed form solution. We refer to \citep{sac} Appendix C for more details.

\paragraph{Q network}
Following \citep{sac, td3,wu2019behavior}, we train two independent Q network $\{Q_{\psi_1}, Q_{\psi_2}\}$ to penalize uncertainty over the future states. We maintain a target Q network $\{Q_{{\psi'}_1}, Q_{{\psi'}_2}\}$ with the same architecture and update the target weights using a weighted sum of the current Q network and the target Q network. When computing the target Q values, we simply take the minimum value of the two Q networks:
\begin{align}
    Q_{\psi'}(s',a')=\min_{j=1,2}Q_{{\psi'}_j}(s',a')
\end{align}
Each Q network is a 3-layer feed-forward neural network. The size of each hidden layer is 256. We apply RELU activation \citep{relu} after each hidden layer.

\paragraph{Behavior policy network}
Following the previous work \citep{bcq,bear}, we learn a conditional variational auto-encoder \citep{vae} as our behavior policy network. The encoder takes a pair of states and actions, and outputs a Gaussian latent variable $Z$. The decoder takes sampled latent code $z$ and states, and outputs a mixture of Gaussian distributions. Both the architecture of the encoder and the decoder is a 3-layer feed-forward neural network. The size of each hidden layer is 512. The activation is relu \citep{relu}. To avoid epistemic uncertainty, we train $B$ ensembles of behavior policy networks. At test time, we randomly select one model to perform the calculations. We found $B=3$ is sufficient for all the experiments. We pre-train the behavior policy network for 400k gradient steps.

\paragraph{Hyperparameter selection}
The hyperparameters used in our method include the target entropy of the policy $\mathcal{H}_0$ and the divergence threshold $\epsilon$. If the target policy entropy is larger than the behavior policy entropy, the OOD actions are naturally included. In our implementation, we set the $\mathcal{H}_0=\nicefrac{\mathcal{H}(\pi_b)}{4}$ for all the Gym tasks \cite{d4rl}. To set $\epsilon$, we first find $\epsilon_{\min}=\min D(\pi_{\theta}, \pi_b)$. Then, $\epsilon=\epsilon_{\min}+\epsilon_{\text{generalization}}$, where $\epsilon_{\text{generalization}}$ indicates the OOD generalization for this dataset. For all the mixed and random datasets, we set $\epsilon_{\text{generalization}}=3.0$. For the medium-expert datasets and hopper-medium, we set $\epsilon_{\text{generalization}}=0.2$. We set $\epsilon_{\text{generalization}}=1.0$ for walker2d-medium dataset and $\epsilon_{\text{generalization}}=7.0$ for halfcheetah-medium dataset. The default hyperparameters are shown in Table~\ref{table:hyper-parameters}.



\begin{table}[!ht]
    \centering
    \caption{Default hyper-parameters}
    \begin{tabular}{cc}
        \toprule
        Hyper-parameter & Value \\
        \midrule
        Optimizer & Adam \citep{adam} \\
        Policy learning rate & \texttt{5e-6} \\
        Q network learning rate & \texttt{3e-4} \\
        batch size & 100 \\
        Target update rate $\tau$ & \texttt{1e-3} \\
        Discount factor $\gamma$ & 0.99 \\
        Steps per epoch $T$ & 2000 \\
        Number of epochs & 500 \\
        \bottomrule
    \end{tabular}
    \label{table:hyper-parameters}
\end{table}



\end{document}